\newtheorem{theorem}{Theorem}
\newtheorem{lemma}[theorem]{Lemma}
\newtheorem{corollary}[theorem]{Corollary}
\theoremstyle{definition}
\newtheorem{assumption}{Assumption}
\theoremstyle{definition}
\newtheorem{remark}{Remark}
\newcommand{\nbb}{\mathbb{N}}
\newcommand{\bw}{\mathbf{w}}
\newcommand{\fcal}{\mathcal{F}}
\newcommand{\xcal}{\mathcal{X}}
\newcommand{\hcal}{\mathcal{H}}
\newcommand{\zcal}{\mathcal{Z}}
\newcommand{\tcal}{\mathcal{T}}
\newcommand{\ecal}{\mathcal{E}}
\newcommand{\ycal}{\mathcal{Y}}
\newcommand{\ebb}{\mathbb{E}}
\newcommand{\rbb}{\mathbb{R}}
\begin{document}

\setlength{\abovedisplayskip}{4pt}
\setlength{\belowdisplayskip}{4pt}

%
% paper title
% Titles are generally capitalized except for words such as a, an, and, as,
% at, but, by, for, in, nor, of, on, or, the, to and up, which are usually
% not capitalized unless they are the first or last word of the title.
% Linebreaks \\ can be used within to get better formatting as desired.
% Do not put math or special symbols in the title.
\title{Stochastic Gradient Descent for Nonconvex Learning without Bounded Gradient Assumptions}
%
%
% author names and IEEE memberships
% note positions of commas and nonbreaking spaces ( ~ ) LaTeX will not break
% a structure at a ~ so this keeps an author's name from being broken across
% two lines.
% use \thanks{} to gain access to the first footnote area
% a separate \thanks must be used for each paragraph as LaTeX2e's \thanks
% was not built to handle multiple paragraphs
%

\author{Yunwen~Lei, Ting Hu, Guiying Li and Ke Tang
\thanks{Y. Lei, G. Li and K. Tang are with the Shenzhen Key Laboratory of Computational Intelligence, Department of Computer Science and Engineering, Southern University of Science and Technology, Shenzhen 518055,
 China (e-mail: leiyw@sustc.edu.cn, lgy807720302@gmail.com, tangk3@sustc.edu.cn).}
\thanks{T. Hu is with the School of Mathematics and Statistics, Wuhan University,
Wuhan 430072, China (e-mail: tinghu@whu.edu.cn).}
\thanks{Accepted by IEEE Transactions on Neural Networks and Learning Systems. DOI: 10.1109/TNNLS.2019.2952219}}

% note the % following the last \IEEEmembership and also \thanks -
% these prevent an unwanted space from occurring between the last author name
% and the end of the author line. i.e., if you had this:
%
% \author{....lastname \thanks{...} \thanks{...} }
%                     ^------------^------------^----Do not want these spaces!
%
% a space would be appended to the last name and could cause every name on that
% line to be shifted left slightly. This is one of those "LaTeX things". For
% instance, "\textbf{A} \textbf{B}" will typeset as "A B" not "AB". To get
% "AB" then you have to do: "\textbf{A}\textbf{B}"
% \thanks is no different in this regard, so shield the last } of each \thanks
% that ends a line with a % and do not let a space in before the next \thanks.
% Spaces after \IEEEmembership other than the last one are OK (and needed) as
% you are supposed to have spaces between the names. For what it is worth,
% this is a minor point as most people would not even notice if the said evil
% space somehow managed to creep in.

% The paper headers

\markboth{}
{Lei \MakeLowercase{\textit{et al.}}: Stochastic Gradient Descent for Nonconvex Learning}
%IEEE Transactions on Neural Networks and Learning Systems
% The only time the second header will appear is for the odd numbered pages
% after the title page when using the twoside option.
%
% *** Note that you probably will NOT want to include the author's ***
% *** name in the headers of peer review papers.                   ***
% You can use \ifCLASSOPTIONpeerreview for conditional compilation here if
% you desire.

% If you want to put a publisher's ID mark on the page you can do it like
% this:
%\IEEEpubid{0000--0000/00\$00.00~\copyright~2015 IEEE}
% Remember, if you use this you must call \IEEEpubidadjcol in the second
% column for its text to clear the IEEEpubid mark.

% use for special paper notices
%\IEEEspecialpapernotice{(Invited Paper)}

% make the title area
\maketitle

% As a general rule, do not put math, special symbols or citations
% in the abstract or keywords.
\begin{abstract}
  Stochastic gradient descent (SGD) is a popular and efficient method with wide applications in training deep neural nets and other nonconvex models.
  While the behavior of SGD is well understood in the convex learning setting, the existing theoretical results for SGD applied to nonconvex objective functions are far from mature.
  For example, existing results require to impose a nontrivial assumption on the uniform boundedness of gradients for all iterates encountered in the learning process, which is hard to verify in practical implementations.
  In this paper, we establish a rigorous theoretical foundation for SGD in nonconvex learning by showing that this boundedness assumption can be removed without affecting convergence rates.
   In particular, we establish sufficient conditions for almost sure convergence as well as optimal convergence rates for SGD applied to both general nonconvex objective functions and gradient-dominated objective functions.
  A linear convergence is further derived in the case with zero variances.
\end{abstract}

% Note that keywords are not normally used for peerreview papers.
\begin{IEEEkeywords}
Stochastic Gradient Descent, Nonconvex Optimization, Learning Theory, Polyak-\L ojasiewicz Condition
\end{IEEEkeywords}

% For peer review papers, you can put extra information on the cover
% page as needed:
% \ifCLASSOPTIONpeerreview
% \begin{center} \bfseries EDICS Category: 3-BBND \end{center}
% \fi
%
% For peerreview papers, this IEEEtran command inserts a page break and
% creates the second title. It will be ignored for other modes.
\IEEEpeerreviewmaketitle

\vspace*{-0.15cm}
\section{Introduction}
\vspace*{-0.05cm}
Stochastic gradient descent (SGD) is an efficient iterative method suitable to tackle large-scale datasets due to its low computational complexity per iteration and its promising practical behavior,
which has found wide applications to solve optimization problems in a variety of areas including machine learning and signal processing.
At each iteration, SGD firstly calculates a gradient based on a randomly selected example and updates the model parameter along the minus gradient direction of the current iterate.
This strategy of processing a single training example makes SGD very popular in the big data era, which enjoys a great computational advantage over its batch counterpart. %SGD has received increasing attention due to its popularity in successfully training deep neural networks.

Theoretical properties of SGD are well understood for optimizing both convex and strongly convex objectives, the latter of which can be relaxed to other assumptions on objective functions, e.g., error bound conditions and Polyak-\L{}ojasiewicz conditions~\citep{polyak1963gradient,karimi2016linear}. As a comparison, SGD applied to nonconvex objective functions are much less studied. Indeed, there is a huge gap between the theoretical understanding of SGD and its very promising practical behavior in the nonconvex learning setting, as exemplified in the setting of training highly nonconvex deep neural networks. For example, while theoretical analysis can only guarantee that SGD may get stuck in local minima, in practice it often converges to special ones with good generalization ability even in the absence of early stopping or explicit regularization.

Motivated by the popularity of SGD in training deep neural networks and nonconvex models as well as the huge gap between the theoretical understanding and its practical success, theoretical analysis of SGD has received increasing attention recently. The first nonasymptotical convergence rates of nonconvex SGD were established in \citep{ghadimi2013stochastic}, which was extended to stochastic variance reduction \citep{reddi2016stochastic} and stochastic proximal gradient descent \citep{ghadimi2016mini}. However, these results require to impose a nontrivial boundedness assumption on the gradients at all iterates encountered in the learning process, which, however depends on the realization of the optimization process and is hard to check in practice. It still remains unclear whether this assumption holds when learning takes place in an unbounded domain, in which scenario the existing analysis is not rigorous. In this paper, we aim to build a sound theoretical foundation for SGD by showing that the same convergence rates can be achieved without any boundedness assumption on gradients in the nonconvex learning setting. We also relax the standard smoothness assumption to a milder H\"older continuity on gradients. As a further step, we consider objective functions satisfying a Polyak-\L ojasiewicz (PL) condition which is widely adopted in the literature of nonconvex optimization. In this case, we derive convergence rates $O(1/t)$ for SGD with $t$ iterations, which also remove the boundedness assumption on gradients imposed in \citep{karimi2016linear} to derive similar convergence rates. We introduce a zero-variance condition which allows us to derive linear convergence of SGD. Sufficient conditions in terms of step sizes are also established for almost sure convergence measured by both function values and gradient~norms.%

%This paper is organized as follows. We formulate problems and present theoretical results in Section \ref{sec:result}. Related work and discussions are presented in Section \ref{sec:discussion}. Proofs and conclusions are presented in Section \ref{sec:proof} and Section \ref{sec:conclusion}, respectively.

\vspace*{-0.15cm}
\section{Problem Formulation and Main Results\label{sec:result}}
\vspace*{-0.05cm}
Let $\rho$ be a probability defined on the sample space $\zcal:=\xcal\times\ycal$ with $\xcal\subset\rbb^d$ being the input space and $\ycal$ being the output space. We are interested in building a prediction rule $h:\xcal\mapsto\ycal$ based on a sequence of examples $\{z_t\}_{t\in\nbb}$ independently drawn from $\rho$. We consider learning in a reproducing kernel Hilbert space (RKHSs) $\hcal_K$ associated  to a Mercer kernel $K:\xcal\times\xcal\mapsto\rbb$. The RKHS $\hcal_K$ is defined as the completion of the linear span of the function set $\{K_x(\cdot):=K(x,\cdot):x\in\xcal\}$ satisfying the reproducing property $\bw(x)=\langle\bw,K_x\rangle$ for any $x\in\xcal$ and $\bw\in\hcal_K$, where $\langle\cdot,\cdot\rangle$ denotes the inner product. The quality of a prediction rule $h$ at an example $z$ is measured by $\ell(h(x),y)$, where $\ell:\rbb\times\rbb\mapsto\rbb_+$ is a differentiable loss function, with which we define the objective function as%can also define the generalization error as the expected loss incurred when using $h$ to do prediction   which is symmetric, continuous and positive semi-definite
\begin{equation}\label{risk}
  \ecal(h)=\ebb_z\big[\ell(h(x),y)\big]=\int\ell(h(x),y)d\rho.
\end{equation}
We consider nonconvex loss functions in this paper.
We implement the learning process by SGD to minimize the objective function over $\hcal_K$. Let $\bw_1=0$ and $z_t=(x_t,y_t)$ be the example sampled according to $\rho$ at the $t$-th iteration. We update the model sequence $\{\bw_t\}_{t\in\nbb}$ in $\hcal_K$ by
\begin{equation}\label{sgd}
\bw_{t+1}=\bw_t-\eta_t\nabla \ell\big(\langle\bw_t,K_{x_t}\rangle,y_t\big)K_{x_t}=\bw_t-\eta_t\nabla f(\bw_t,z_t),
\end{equation}
where $\nabla\ell$ denotes the gradient of $\ell$ with respect to the first argument, $\{\eta_t\}_{t\in\nbb}$ is a sequence of positive step sizes and we introduce $f(\bw,z)=\ell\big(\langle\bw,K_x\rangle,y\big)$ for brevity. We denote $\|\cdot\|_2$ the RKHS norm in $\hcal_K$.

%In this paper, we consider the convergence of stochastic gradient descent (SGD) when applied to non-convex optimization problems.
%Suppose at the $t$-th iteration, we are given a point $z\in\zcal$ based on which we build an unbiased estimator $f(\bw_t,z_t)$ of $\ecal(\bw)=\ebb_z[f(\bw,z)]$
%and update the iterate as follows

%where $\{\eta_t\}_{t\in\nbb}$ is a sequence of positive step sizes.

Our theoretical analysis is based on a fundamental assumption on the regularity of loss functions. Assumption \ref{asm:smooth} with $\alpha=1$ corresponds to a smooth assumption standard in nonconvex learning, which is extended to a general H\"older continuity assumption on the gradient of loss functions here. %in this paper. %For brevity, we denote $f(\bw,z)=\ell\big(\langle\bw,K_x\rangle,y\big)$ and $\|\cdot\|_2$ the RKHS norm in $\hcal_K$.

\begin{assumption}\label{asm:smooth}
Let $\alpha\in(0,1]$ and $L>0$. We assume that the gradient of $f(\cdot,z)$ is $\alpha$-H\"older continuous in the sense that
\[
  \|\nabla f(\bw,z)-\nabla f(\tilde{\bw},z)\|\leq L\|\bw-\tilde{\bw}\|_2^\alpha,\;\forall \bw,\tilde{\bw}\in\hcal_K,z\in\zcal.
\]
%We also assume $f(\bw,z)\geq0$ for all $\bw\in\hcal_K$ and $z\in\zcal$.
\end{assumption}

For any function $\phi:\hcal_K\mapsto\rbb$ with H\"older continuous gradients, we have the following lemma playing an important role in our analysis. Eq. \eqref{smooth-a} provides a quantitative measure on the accuracy of approximating $\phi$ with its first-order approximation, while \eqref{smooth-b} provides a self-bounding property meaning that the norm of gradients can be controlled by function values.

\begin{lemma}\label{lem:smooth}
  Let $\phi:\hcal_K\mapsto\rbb$ be a differentiable function. Let $\alpha\in(0,1]$ and $L>0$. If for all $\bw,\tilde{\bw}\in\hcal_K,z\in\zcal$
  \begin{equation}\label{smooth-condition}
  \|\nabla\phi(\bw)-\nabla\phi(\tilde{\bw})\|_2\leq L\|\bw-\tilde{\bw}\|_2^\alpha,
  \end{equation} then, we have
  \begin{equation}\label{smooth-a}
    \phi(\tilde{\bw})\leq \phi(\bw)+\langle \tilde{\bw}-\bw,\nabla\phi(\bw)\rangle+\frac{L}{1+\alpha}\|\bw-\tilde{\bw}\|_2^{1+\alpha}.
  \end{equation}
  Furthermore, if $\phi(\bw)\geq0$ for all $\bw\in\hcal_K$, then
  \begin{equation}\label{smooth-b}
  \|\nabla\phi(\bw)\|_2^{\frac{1+\alpha}{\alpha}}\leq \frac{(1+\alpha)L^{1\over\alpha}}{\alpha}\phi(\bw),\quad\forall \bw\in\hcal_K.
  \end{equation}
\end{lemma}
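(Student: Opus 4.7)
The plan is to prove the two inequalities in sequence, deriving \eqref{smooth-b} as a consequence of \eqref{smooth-a}.

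For \eqref{smooth-a}, I would use the first-order form of the fundamental theorem of calculus along the line segment connecting $\bw$ and $\tilde{\bw}$: writing
\[
\phi(\tilde{\bw})-\phi(\bw)=\int_0^1 \langle\nabla\phi(\bw+t(\tilde{\bw}-\bw)),\tilde{\bw}-\bw\rangle\,dt,
\]
then subtracting $\langle\nabla\phi(\bw),\tilde{\bw}-\bw\rangle$ from both sides, the difference of the integrand becomes $\langle\nabla\phi(\bw+t(\tilde{\bw}-\bw))-\nabla\phi(\bw),\tilde{\bw}-\bw\rangle$. Cauchy--Schwarz in $\hcal_K$ followed by the H\"older continuity \eqref{smooth-condition} bounds the integrand by $L t^{\alpha}\|\tilde{\bw}-\bw\|_2^{1+\alpha}$. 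Integrating $\int_0^1 t^\alpha dt=1/(1+\alpha)$ yields the stated inequality. This step is essentially routine once the integral representation is written down.

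For \eqref{smooth-b}, I would apply \eqref{smooth-a} at the specific choice $\tilde{\bw}=\bw-c\nabla\phi(\bw)$ for an arbitrary $c>0$. Using the nonnegativity $\phi(\tilde{\bw})\geq 0$, this yields
\[
0\leq \phi(\bw)-c\|\nabla\phi(\bw)\|_2^2+\frac{L c^{1+\alpha}}{1+\alpha}\|\nabla\phi(\bw)\|_2^{1+\alpha}.
\]
The remaining task is to choose $c$ so as to make the negative contribution on the right-hand side as large as possible. Setting the derivative in $c$ to zero produces $c^\alpha=\|\nabla\phi(\bw)\|_2^{1-\alpha}/L$, i.e.\ $c=L^{-1/\alpha}\|\nabla\phi(\bw)\|_2^{(1-\alpha)/\alpha}$. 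Plugging this back in, both the linear and the $(1+\alpha)$-power terms become multiples of $L^{-1/\alpha}\|\nabla\phi(\bw)\|_2^{(1+\alpha)/\alpha}$, and the inequality collapses to $\alpha\|\nabla\phi(\bw)\|_2^{(1+\alpha)/\alpha}/((1+\alpha)L^{1/\alpha})\leq \phi(\bw)$, which is exactly \eqref{smooth-b}.

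I do not anticipate any serious obstacle: step one is a classical descent-lemma argument extended to the H\"older exponent $\alpha$, and step two is a one-variable optimization. The only care needed is tracking the exponents $(1-\alpha)/\alpha$, $(1+\alpha)/\alpha$, and $(1-\alpha^2)/\alpha+(1+\alpha)=(1+\alpha)/\alpha$ correctly so that the dependence on $\|\nabla\phi(\bw)\|_2$ on both sides matches after substitution, and confirming that the chosen $c$ is automatically positive. No additional structure of $\hcal_K$ beyond its Hilbert-space inner product is used, so the argument works verbatim in this infinite-dimensional setting.
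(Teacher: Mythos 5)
Your proposal is correct and follows essentially the same route as the paper: the paper proves \eqref{smooth-a} by the standard integral/descent-lemma argument (deferred to Proposition 1(a) of the cited reference, which is exactly your computation), and for \eqref{smooth-b} it substitutes the very same $\tilde{\bw}=\bw-L^{-1/\alpha}\|\nabla\phi(\bw)\|_2^{(1-\alpha)/\alpha}\nabla\phi(\bw)$ that your one-variable optimization over $c$ produces. The only cosmetic difference is that the paper states this choice directly (treating the degenerate case $\nabla\phi(\bw)=0$ separately, where \eqref{smooth-b} is trivial) rather than deriving it by minimizing over $c$.
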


Lemma \ref{lem:smooth} to be proved in Section \ref{sec:proof-main} is an extension of Proposition 1 in \citep{ying2017unregularized} from univariate functions to multivariate functions.
It should be noted that \eqref{smooth-b} improves Proposition 1 (d) in \citep{ying2017unregularized} by removing a factor of $(1+\alpha)^{\frac{1}{\alpha}}$.

\vspace*{-0.12cm}
\subsection{General nonconvex objective functions}
\vspace*{-0.03cm}
We now present theoretical results for SGD with general nonconvex loss functions. In this case we measure the progress of SGD in terms of gradients.
Part (a) gives a nonasymptotic convergence rate by step sizes, while Parts (b) and (c) provide sufficient conditions on the asymptotic convergence measured by function values and gradient norms, respectively.

\begin{theorem}\label{thm:main}
  Suppose that Assumption \ref{asm:smooth} holds.
  Let $\{\bw_t\}_{t\in\nbb}$ be produced by \eqref{sgd} with the step sizes satisfying $C_1:=\sum_{t=1}^{\infty}\eta_t^{1+\alpha}<\infty$. Then, the following three statements hold.
  \begin{enumerate}[(a)]
    \item There is a constant $C$ independent of $t$ such that %(explicitly given in the proof)
    \begin{equation}\label{main-a}
        \min_{t=1,\ldots,T}\ebb[\|\nabla \ecal(\bw_t)\|_2^2]\leq C\Big(\sum_{t=1}^{T}\eta_t\Big)^{-1}.
    \end{equation}
    \item $\{\ecal(\bw_t)\}_t$ converges to an almost surely (a.s.) bounded random variable.
    \item If Assumption \ref{asm:smooth} holds with $\alpha=1$ and $\sum_{t=1}^{\infty}\eta_t=\infty$, then $\lim_{t\to\infty}\ebb[\|\nabla \ecal(\bw_t)\|_2]=0$.
  \end{enumerate}
\end{theorem}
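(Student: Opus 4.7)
My plan is to derive a one-step descent inequality from Lemma~\ref{lem:smooth} and then use the sample-wise self-bounding property \eqref{smooth-b} applied to $f(\cdot,z)$ as a replacement for the bounded-gradient hypothesis. Since $\ell\ge 0$ and $\nabla f(\cdot,z)$ is $\alpha$-H\"older, $\ecal$ inherits the same regularity by pushing the bound through $\ebb_z$. Applying \eqref{smooth-a} to $\ecal$ at $\bw_{t+1},\bw_t$ and conditioning on $\bw_t$ yields
\[
\ebb[\ecal(\bw_{t+1})\mid\bw_t]\le \ecal(\bw_t)-\eta_t\|\nabla\ecal(\bw_t)\|_2^2+\tfrac{L\eta_t^{1+\alpha}}{1+\alpha}\ebb_{z_t}\!\big[\|\nabla f(\bw_t,z_t)\|_2^{1+\alpha}\big].
\]
Applying \eqref{smooth-b} to $f(\cdot,z_t)\ge 0$, raising both sides to the $\alpha$ power, and using Jensen's inequality (concavity of $x\mapsto x^\alpha$ for $\alpha\in(0,1]$) controls the last expectation by $C\,\ecal(\bw_t)^\alpha$ for an explicit constant $C=C(\alpha,L)$. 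The resulting key recursion
\[
\ebb[\ecal(\bw_{t+1})\mid\bw_t]\le \ecal(\bw_t)-\eta_t\|\nabla\ecal(\bw_t)\|_2^2+C\eta_t^{1+\alpha}\ecal(\bw_t)^\alpha
\]
is the starting point for all three parts.

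\textbf{Proof of (a).} Taking full expectation and discarding the nonpositive gradient term gives $u_{t+1}\le u_t+C\eta_t^{1+\alpha}u_t^\alpha$ with $u_t:=\ebb[\ecal(\bw_t)]$ (Jensen once more). Using $u_t^\alpha\le 1+u_t$ when $\alpha<1$, together with $C_1=\sum_t\eta_t^{1+\alpha}<\infty$, a discrete Gronwall argument produces the uniform bound $M:=\sup_t u_t<\infty$. Returning to the recursion in full expectation, telescoping from $1$ to $T$, and invoking $\ecal\ge 0$ yields $\sum_{t=1}^T\eta_t\ebb[\|\nabla\ecal(\bw_t)\|_2^2]\le u_1+CM^\alpha C_1$, from which \eqref{main-a} follows by bounding the minimum by the weighted average.

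\textbf{Proof of (b).} Absorbing $\ecal(\bw_t)^\alpha\le 1+\ecal(\bw_t)$ converts the recursion into the Robbins-Siegmund form $\ebb[\ecal(\bw_{t+1})\mid\bw_t]\le (1+C\eta_t^{1+\alpha})\ecal(\bw_t)+C\eta_t^{1+\alpha}-\eta_t\|\nabla\ecal(\bw_t)\|_2^2$, with the two step-size coefficient sequences summable and the last term nonnegative. The Robbins-Siegmund almost-supermartingale theorem then delivers a.s. convergence of $\ecal(\bw_t)$ to a finite (hence a.s. bounded) random variable and, as a bonus, $\sum_t\eta_t\|\nabla\ecal(\bw_t)\|_2^2<\infty$ a.s.

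\textbf{Proof of (c) and main obstacle.} Part (c) is the most delicate: under $\alpha=1$ and $\sum_t\eta_t=\infty$ I must promote the summability $\sum_t\eta_t\ebb[\|\nabla\ecal(\bw_t)\|_2^2]<\infty$ (from the analysis of (a)) into the genuine limit $g_t:=\ebb[\|\nabla\ecal(\bw_t)\|_2]\to 0$. Summability plus $\sum_t\eta_t=\infty$ alone only yields a liminf statement, so the core difficulty is ruling out oscillations of $g_t$. Self-bounding re-enters here: with $\alpha=1$, the reverse triangle inequality and the Lipschitz bound on $\nabla\ecal$ give $|g_{t+1}-g_t|\le L\eta_t\,\ebb\|\nabla f(\bw_t,z_t)\|_2$, and combining $\|\nabla f(\bw,z)\|_2^2\le 2Lf(\bw,z)$ with Cauchy-Schwarz and Jensen bounds the last factor by $\sqrt{2LM}$, so $|g_{t+1}-g_t|\le L\sqrt{2LM}\,\eta_t$. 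A standard disjoint-window contradiction then concludes: if some $\epsilon>0$ admits $g_{t_k}\ge 2\epsilon$ infinitely often, the Lipschitz-in-$\eta$ control forces a window of $\eta$-mass at least $\epsilon/(L\sqrt{2LM})$ after each $t_k$ on which $g_t\ge\epsilon$; choosing the $t_k$ so these windows are disjoint makes $\sum_t\eta_t g_t^2$ diverge, contradicting $\sum_t\eta_t g_t^2\le \sum_t\eta_t\ebb[\|\nabla\ecal(\bw_t)\|_2^2]<\infty$ (Cauchy-Schwarz on $g_t^2$).
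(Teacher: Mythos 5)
Your proposal is correct and follows the paper's overall strategy: a one-step descent inequality from \eqref{smooth-a}, the self-bounding property \eqref{smooth-b} applied to $f(\cdot,z)$ in place of any bounded-gradient hypothesis, a Gronwall/telescoping argument for (a), a supermartingale argument for (b), and an oscillation-control contradiction for (c). The local executions differ in three places, all legitimate. For (a), you control $\ebb_{z_t}[f^\alpha(\bw_t,z_t)]$ by Jensen and then linearize via $u^\alpha\le 1+u$, whereas the paper linearizes pointwise with Young's inequality before taking expectations; both yield the same recursion $u_{t+1}\le(1+c\eta_t^{1+\alpha})u_t+c\eta_t^{1+\alpha}$ and the uniform bound $\sup_t u_t<\infty$. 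For (b), you invoke Robbins--Siegmund off the shelf; the paper effectively reproves that theorem by multiplying through by $\prod_{k\ge t+1}(1+L^2\eta_k^{1+\alpha})$ to manufacture an exact nonnegative supermartingale and then applying Doob's convergence theorem. Your route is shorter but leans on an external result; the paper's is self-contained. For (c), both proofs hinge on the Lipschitz-in-$\eta_t$ control $|g_{t+1}-g_t|\le c\,\eta_t$ with $g_t=\ebb[\|\nabla\ecal(\bw_t)\|_2]$ (obtained from $\|\nabla f\|_2^2\le 2Lf$ and the uniform bound on $\ebb[\ecal(\bw_t)]$), but you close the argument by showing infinitely many disjoint windows of $\eta$-mass $\ge\epsilon/c$ on which $g_t\ge\epsilon$ would force $\sum_t\eta_t g_t^2=\infty$, contradicting the summability already established in (a); the paper instead sets up upcrossing intervals from $\epsilon$ to $2\epsilon$ and derives the contradiction from the convergence of $\ebb[\ecal(\bw_t)]$ together with $\sum_t\eta_t^2<\infty$. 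Your version is slightly cleaner in that it needs only the summability $\sum_t\eta_t g_t^2<\infty$ and not Part (b), and it directly gives $\limsup_t g_t\le 2\epsilon$ for every $\epsilon>0$; just make sure to state explicitly that each window is finite (since $\sum_t\eta_t=\infty$) so that the disjoint selection can be iterated.
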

\begin{remark}
  Part (a) was derived in \citep{ghadimi2013stochastic} under the boundedness assumption $\ebb_z\big[\|\nabla f(\bw_t,z)-\nabla\ecal(\bw_t)\|_2^2\big]\leq\sigma^2$ for a constant $\sigma>0$ and all $t\in\nbb$.
This boundedness assumption depends on the realization of the optimization process and it is therefore difficult to check in practice. It was removed in our analysis. Although Parts (b), (c) do not give convergence rates, an appealing property is that they consider individual iterates. As a comparison, the convergence rates
in \eqref{main-a} only hold for the minimum of the first $T$ iterates. The analysis for individual iterates is much more challenging than that for the minimum over all iterates. Indeed, Part (c) is based on a careful analysis with the contradiction strategy.
\end{remark}

We can derive explicit convergence rates by instantiating the step sizes in Theorem \ref{thm:main}. If $\alpha=1$, the convergence rate in Part (b) becomes $O(T^{-\frac{1}{2}}\log^{\frac{\beta}{2}}T)$ which is minimax optimal up to a logarithmic factor.
\begin{corollary}\label{cor:rate}
  Suppose that Assumption \ref{asm:smooth} holds. Let $\{\bw_t\}_{t\in\nbb}$ be the sequence produced by \eqref{sgd}. Then,
  \begin{enumerate}[(a)]
    \item If $\eta_t=\eta_1t^{-\theta}$ with $\theta\in(1/(1+\alpha),1)$, then $\min_{t=1,\ldots,T}\ebb[\|\nabla \ecal(\bw_t)\|_2^2]=O(T^{\theta-1})$.
    \item If $\eta_t=\eta_1(t\log^\beta (t+1))^{-\frac{1}{1+\alpha}}$ with $\beta>1$, then $\min_{t=1,\ldots,T}\ebb[\|\nabla \ecal(\bw_t)\|_2^2]=O(T^{-\frac{\alpha}{\alpha+1}}\log^{\frac{\beta}{1+\alpha}}T)$.
  \end{enumerate}
\end{corollary}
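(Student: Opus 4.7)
The plan is to derive both rates as direct consequences of Theorem \ref{thm:main}(a), which gives the bound
\[
\min_{t=1,\ldots,T}\ebb[\|\nabla\ecal(\bw_t)\|_2^2]\leq C\Big(\sum_{t=1}^{T}\eta_t\Big)^{-1}
\]
provided the square-summability-type condition $C_1=\sum_{t=1}^{\infty}\eta_t^{1+\alpha}<\infty$ holds (the constant $C$ depending on $C_1$ but not on $T$). So the whole task reduces to two deterministic checks for each specified step-size schedule: (i) verify $C_1<\infty$ so that Theorem \ref{thm:main}(a) applies; (ii) produce a sharp lower estimate of $\sum_{t=1}^T\eta_t$ and invert.

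For part (a) with $\eta_t=\eta_1 t^{-\theta}$, I would observe that $\eta_t^{1+\alpha}=\eta_1^{1+\alpha}t^{-\theta(1+\alpha)}$, and since $\theta>1/(1+\alpha)$ gives $\theta(1+\alpha)>1$, the series $\sum_t t^{-\theta(1+\alpha)}$ converges by the integral test, so $C_1<\infty$. For the denominator, since $\theta<1$, the integral comparison $\sum_{t=1}^T t^{-\theta}\geq \int_1^{T+1} s^{-\theta}\,ds\gtrsim T^{1-\theta}$ immediately yields $\min_{t\leq T}\ebb[\|\nabla\ecal(\bw_t)\|_2^2]=O(T^{\theta-1})$.

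For part (b) with $\eta_t=\eta_1(t\log^\beta(t+1))^{-\frac{1}{1+\alpha}}$, the quantity $\eta_t^{1+\alpha}$ is proportional to $1/(t\log^\beta(t+1))$, whose summability is the classical Bertrand-type criterion: the integral $\int_2^\infty ds/(s\log^\beta s)$ converges iff $\beta>1$, giving $C_1<\infty$. For the partial sum, I would again pass to an integral comparison, bounding $\sum_{t=1}^T\eta_t$ from below by (a constant multiple of) $\int_1^T s^{-\frac{1}{1+\alpha}}(\log s)^{-\frac{\beta}{1+\alpha}}\,ds$. Since $1/(1+\alpha)<1$ the logarithmic factor changes slowly compared with the power of $s$, so by an elementary integration-by-parts argument (or by comparing with $s^{-\frac{1}{1+\alpha}}(\log s)^{-\frac{\beta}{1+\alpha}}$ at the upper endpoint) this integral is of exact order $T^{\frac{\alpha}{1+\alpha}}(\log T)^{-\frac{\beta}{1+\alpha}}$. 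Inversion then produces the claimed $O\bigl(T^{-\frac{\alpha}{1+\alpha}}\log^{\frac{\beta}{1+\alpha}}T\bigr)$ rate.

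Neither step contains a genuine obstacle; the only mildly delicate point is the sharpness of the integral estimate in (b), where one must confirm that the slowly-varying logarithmic factor does not alter the leading polynomial order — this is standard but deserves an explicit argument (e.g., splitting $\int_1^T$ into $\int_1^{\sqrt{T}}$ and $\int_{\sqrt{T}}^T$, on which $\log s$ is comparable to $\log T$ up to a constant factor on the second piece, and the first piece is negligible). Once both sum estimates are in place, the corollary follows by plugging into \eqref{main-a}.
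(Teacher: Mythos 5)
Your proposal is correct and follows exactly the paper's route: verify $\sum_t\eta_t^{1+\alpha}<\infty$ so that Theorem \ref{thm:main}(a) applies, then lower-bound $\sum_{t=1}^T\eta_t$ by integral comparison and invert. The paper omits the details of part (b) as ``analogous,'' so your explicit handling of the slowly-varying logarithmic factor (splitting the sum at $\sqrt{T}$) simply fills in what the authors left implicit.
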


\vspace*{-0.12cm}
\subsection{Objective functions with Polyak-\L ojasiewicz inequality}
\vspace*{-0.03cm}
We now proceed with our convergence analysis by imposing an assumption referred to as PL inequality named after Polyak and \L ojasiewicz~\citep{polyak1963gradient}.
Intuitively, this inequality means that the suboptimality of iterates measured by function values can be bounded by gradient norms. %, from which we know that every stationary points must be a global minimizer.
PL condition is also referred to as gradient dominated condition in the literature \citep{reddi2016stochastic}, and widely adopted in the analysis in both the convex and nonconvex optimization setting \citep{chang2018generalization,karimi2016linear,foster2018uniform}.
Examples of functions satisfying PL condition include neural networks with one-hidden layers, ResNets with linear activation and objective functions in matrix factorization~\citep{foster2018uniform}.
It should be noted that
functions satisfying the PL condition is not necessarily convex.
\begin{assumption}\label{ass:PL}
  We assume that the function $\ecal$ satisfies the PL inequality with the parameter $\mu>0$, i.e.,
  \[
  \ecal(\bw)-\ecal(\bw^*)\leq (2\mu)^{-1}\|\nabla \ecal(\bw)\|_2^2,\quad\forall\bw\in\hcal_K,
  \]
  where $\bw^*=\arg\min_{\bw\in\hcal_K}\ecal(\bw)$.
\end{assumption}

Under Assumption \ref{ass:PL}, we can state convergence results measured by the suboptimality of function values. Part (a) provides a sufficient condition for almost sure convergence measured by function values and gradient norms, while Part (b) establishes explicit convergence rates for step sizes reciprocal to the iteration number. If $\alpha=1$, we derive convergence rates $O(t^{-1})$ after $t$ iterations, which is minimax optimal even when the objective function is strongly convex. %~\citep{agarwal2009information}
Part (c) shows that a linear convergence can be achieved if $\ebb[\|\nabla f(\bw^*,z)\|_2^2]=0$, which extends the linear convergence of gradient descent~\citep{karimi2016linear} to the stochastic setting. The assumption $\ebb[\|\nabla f(\bw^*,z)\|_2^2]=0$ means that variances of the stochastic gradient vanish at $\bw=\bw^*$ since $\text{Var}(f(\bw^*,z))=\ebb\big[\|f(\bw^*,z)-\nabla\ecal(\bw^*)\|_2^2\big]=0$.
\begin{theorem}\label{thm:pl}
  Let Assumptions \ref{asm:smooth} and \ref{ass:PL} hold. Let $\{\bw_t\}_{t\in\nbb}$ be produced by \eqref{sgd}. Then the following statements hold.
  \begin{enumerate}[(a)]
    \item If $\sum_{t=1}^{\infty}\eta_t^{1+\alpha}<\infty$ and $\sum_{t=1}^{\infty}\eta_t=\infty$, then  a.s. $\lim_{t\to\infty}\ecal(\bw_t)=\ecal(\bw^*)$ and $\lim_{t\to\infty}\|\nabla \ecal(\bw_t)\|_2=0$.
    \item If $\eta_t=2/((t+1)\mu)$, then for any $t\geq t_0:=2L^{\frac{2}{\alpha}}\mu^{-\frac{1+\alpha}{\alpha}}$ we have
  $\ebb[\ecal(\bw_{t+1})]-\ecal(\bw^*)\leq \widetilde{C}t^{-\alpha}$,
  where $\widetilde{C}$ is a constant independent of $t$ (explicitly given in the proof).
  \item If $\ebb[\|\nabla f(\bw^*,z)\|_2^2]=0$, Assumption \ref{asm:smooth} holds with $\alpha=1$ and $\eta_t=\eta\leq \mu/L^2$, then
  \[
  \ebb[\ecal(\bw_{t+1})]-\ecal(\bw^*)\leq (1-\mu\eta)^t(\ecal(\bw_1)-\ecal(\bw^*)).
  \]
  \end{enumerate}
\end{theorem}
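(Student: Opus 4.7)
The plan is to derive a single one-step master inequality and then specialize it to each of the three parts. Substituting the update \eqref{sgd} into the descent bound \eqref{smooth-a} applied to $\phi=\ecal$, taking conditional expectation in $z_t$, and bounding the residual via the self-bounding property \eqref{smooth-b} applied sample-wise to $f(\cdot,z_t)$ together with Jensen's inequality (since $x\mapsto x^\alpha$ is concave on $[0,\infty)$ for $\alpha\in(0,1]$) yields
\begin{equation*}
\ebb_{z_t}[\ecal(\bw_{t+1})]\leq \ecal(\bw_t)-\eta_t\|\nabla\ecal(\bw_t)\|_2^2+D\eta_t^{1+\alpha}\ecal(\bw_t)^\alpha
\end{equation*}
for a constant $D=D(L,\alpha)$. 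Assumption~\ref{ass:PL} lets me further replace $\|\nabla\ecal(\bw_t)\|_2^2$ by $2\mu(\ecal(\bw_t)-\ecal(\bw^*))$ whenever convenient.

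For Part (a), I would apply the Robbins--Siegmund almost-supermartingale theorem to the master recursion. Because $\sum_t\eta_t^{1+\alpha}<\infty$ and, by Theorem~\ref{thm:main}(b), $\ecal(\bw_t)$ is a.s.\ bounded, the noise term $\eta_t^{1+\alpha}\ecal(\bw_t)^\alpha$ is a.s.\ summable. This simultaneously delivers $\ecal(\bw_t)\to\xi$ a.s.\ for some random $\xi$ and $\sum_t\eta_t\|\nabla\ecal(\bw_t)\|_2^2<\infty$ a.s. Combined with $\sum_t\eta_t=\infty$ this forces $\liminf_t\|\nabla\ecal(\bw_t)\|_2=0$ a.s., and PL converts that to $\liminf_t(\ecal(\bw_t)-\ecal(\bw^*))=0$ a.s., so $\xi=\ecal(\bw^*)$ a.s. Applying \eqref{smooth-b} to the nonnegative function $\ecal-\ecal(\bw^*)$ then bounds $\|\nabla\ecal(\bw_t)\|_2^{(1+\alpha)/\alpha}$ by a constant times $\ecal(\bw_t)-\ecal(\bw^*)\to 0$, completing Part (a).

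For Part (b), taking total expectation in the master inequality, invoking PL on $\ebb[\|\nabla\ecal(\bw_t)\|_2^2]$, and using Jensen's inequality $\ebb[\ecal(\bw_t)^\alpha]\leq(\ebb[\ecal(\bw_t)])^\alpha$, set $a_t=\ebb[\ecal(\bw_t)]-\ecal(\bw^*)$ to obtain
\begin{equation*}
a_{t+1}\leq (1-2\mu\eta_t)a_t+D\eta_t^{1+\alpha}(a_t+\ecal(\bw^*))^\alpha.
\end{equation*}
With $\eta_t=2/((t+1)\mu)$ the prefactor equals $(t-3)/(t+1)$. I would first argue inductively that $\{a_t\}$ stays bounded for $t\geq t_0$, so that $(a_t+\ecal(\bw^*))^\alpha$ is uniformly bounded by some constant $M$; the standard recursion lemma ``$a_{t+1}\leq(1-c/(t+1))a_t+K(t+1)^{-(1+\alpha)}$ with $c>\alpha$ implies $a_t=O(t^{-\alpha})$'' then yields the rate $O(t^{-\alpha})$, with the explicit $\widetilde C$ tracked through the induction constants, and the threshold $t_0$ governed by when the perturbation $D\eta_t^{1+\alpha}M$ becomes dominated by the PL contraction.

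Part (c) is the main obstacle. With $\alpha=1$ the descent inequality keeps the sharper quadratic form
\begin{equation*}
\ebb_{z_t}[\ecal(\bw_{t+1})]\leq\ecal(\bw_t)-\eta\|\nabla\ecal(\bw_t)\|_2^2+\tfrac{L\eta^2}{2}\ebb_{z_t}[\|\nabla f(\bw_t,z_t)\|_2^2].
\end{equation*}
The hypothesis $\ebb[\|\nabla f(\bw^*,z)\|_2^2]=0$ gives $\nabla f(\bw^*,z)=0$ a.s., and the key technical step is converting this into the expected-smoothness bound $\ebb_z[\|\nabla f(\bw_t,z)\|_2^2]\leq 2L(\ecal(\bw_t)-\ecal(\bw^*))$, which I plan to obtain by applying \eqref{smooth-b} sample-wise to the nonnegative excess $f(\cdot,z)-f(\bw^*,z)$ (exploiting that $\bw^*$ is a critical point of $f(\cdot,z)$ a.s.). Substituting and invoking PL once more produces the one-step contraction $\ebb[a_{t+1}]\leq(1-2\mu\eta+L^2\eta^2)a_t$, and the condition $\eta\leq\mu/L^2$ collapses this to $(1-\mu\eta)a_t$, so iteration yields $(1-\mu\eta)^t$. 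The delicacy lies entirely in securing the expected-smoothness bound with constant $2L$: the cruder Lipschitz-based bound $\|\nabla f(\bw_t,z)\|_2^2\leq L^2\|\bw_t-\bw^*\|_2^2$ combined with PL-implied quadratic growth would only deliver a contraction under the much stricter step size $\eta\leq\mu^2/L^3$, and would fall short of the stated constraint.
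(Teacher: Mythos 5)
Your proposal is correct, and for Parts (b) and (c) it essentially reproduces the paper's argument: the same one-step inequality (the paper's \eqref{main-3}, which is your master inequality after bounding $\ecal(\bw_t)^\alpha\leq\alpha\ecal(\bw_t)+1-\alpha$ by Young's inequality), the same recursion $B_{t+1}\leq(1-\mu\eta_t)B_t+C_5\eta_t^{1+\alpha}$ solved elementarily (the paper multiplies by $t(t+1)$ and telescopes where you invoke the standard Chung-type recursion lemma), and for Part (c) exactly the same device of applying the self-bounding property \eqref{smooth-b} to the a.s.\ nonnegative excess $f(\cdot,z)-f(\bw^*,z)$ to obtain $\ebb_z[\|\nabla f(\bw_t,z)\|_2^2]\leq 2L\big(\ecal(\bw_t)-\ecal(\bw^*)\big)$; your remark that the cruder Lipschitz bound would only reach $\eta\leq\mu^2/L^3$ is precisely why the paper takes this route as well. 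The genuine divergence is in Part (a). The paper first proves convergence in expectation, $\lim_{T\to\infty}\ebb[\ecal(\bw_T)]=\ecal(\bw^*)$, from the deterministic recursion \eqref{pl-1} together with Lemma \ref{lem:elementaryineq}, and then upgrades to almost sure convergence by combining Theorem \ref{thm:main}(b) with Fatou's lemma to identify the a.s.\ limit. You instead apply the Robbins--Siegmund almost-supermartingale theorem once, obtaining simultaneously $\ecal(\bw_t)\to\xi$ a.s.\ and $\sum_t\eta_t\|\nabla\ecal(\bw_t)\|_2^2<\infty$ a.s., and identify $\xi=\ecal(\bw^*)$ through $\liminf_t\|\nabla\ecal(\bw_t)\|_2=0$ and the PL inequality. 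Both are sound: your perturbation term $\eta_t^{1+\alpha}\ecal(\bw_t)^\alpha$ is $\fcal_t$-measurable and a.s.\ summable, as Robbins--Siegmund requires. Your route is shorter and dispenses with Lemma \ref{lem:elementaryineq} and Fatou, at the cost of invoking a heavier off-the-shelf theorem than the plain Doob supermartingale convergence the paper restricts itself to, and it does not yield the convergence in expectation that the paper gets as a byproduct (not needed for the stated claim). The final conversion of $\ecal(\bw_t)\to\ecal(\bw^*)$ into $\|\nabla\ecal(\bw_t)\|_2\to0$ via \eqref{smooth-b} applied to $\ecal-\ecal(\bw^*)$ is identical in both.
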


\begin{remark}
Conditions as $\sum_{t=1}^{\infty}\eta_t^2<\infty$ and $\sum_{t=1}^{\infty}\eta_t=\infty$ are established for almost sure convergence with strongly convex objectives, which are extended here to nonconvex learning under PL conditions. Convergence rates $O(t^{-1})$ were established for nonconvex optimization under PL conditions, bounded gradient assumption as $\ebb[\|\nabla f(\bw_t,z)\|_2^2]\leq\sigma^2$ and smoothness assumptions~\citep{karimi2016linear}. We derive the same convergence rates without the bounded gradient assumption and relax the smoothness assumption to a H\"older continuity of $\nabla f(\bw,z)$.
\end{remark}

\vspace*{-0.15cm}
\section{Related work and Discussions\label{sec:discussion}}
\vspace*{-0.05cm}
SGD has been comprehensively studied in the literature, mainly in the convex setting. For generally convex objective functions, regret bounds $O(\sqrt{T})$ were established for SGD with $T$ iterates \citep{zhang2004solving} which directly imply convergence rates $O(1/\sqrt{T})$~\citep{bottou2018optimization}. For strongly convex objective functions, regret bounds can be improved to $O(\log T)$ \citep{hazan2007logarithmic} which imply convergence rates $O(\log T/T)$. These results were extended to online learning in RKHSs \citep{ying2006online,lin2018online,hu2009online} and learning with a mirror map to capture geometry of problems \citep{lei2018stochastic,lei2018convergenceb}.

%This superfluous logarithmic term can be successfully removed by taking a uniform average of iterates in the second half~\citep{rakhlin2012making}.
%Sufficient conditions similar to Part (c) of Theorem \ref{thm:main} for convergence of function values were established for SGD with convex loss functions \citep{ying2017unregularized,lei2018convergenceb}, which are extended here to the nonconvex learning with convergence measured by gradient norms.
%We also extend the linear convergence for gradient descent applied to gradient-dominated functions to the stochastic optimization setting in the case with zero variances.

As compared to the maturity of understanding in convex optimization, convergence analysis for SGD in the nonconvex setting are far from satisfactory. Asymptotic convergence of SGD was established under the assumption $\ebb_z\big[\|\nabla f(\bw_t,z)-\nabla\ecal(\bw_t)\|_2^2\big]\leq A\big(1+\|\nabla\ecal(\bw_t)\|_2^2\big)$ for $A>0$ and all $t\in\nbb$~\citep{bertsekas2000gradient}. Nonasymptotic convergence rates similar to \eqref{main-a} were established in \citep{ghadimi2013stochastic} under boundedness assumption $\ebb[\|\nabla f(\bw_t,z_t)\|_2^2]\leq\sigma^2$ for all $t\in\nbb$. For objective functions satisfying PL conditions, convergence rates $O(1/T)$ were established for SGD under boundedness assumptions $\ebb[\|\nabla f(\bw_t,z_t)\|_2^2]\leq\sigma^2$ for all $t\in\nbb$~\citep{karimi2016linear}. This boundedness assumption in the literature depends on the realization of the optimization process, which is hard to check in practical implementations. In this paper we show that the same convergence rates can be established without any boundedness assumptions. This establishes a rigorous foundation to safeguard SGD. Existing discussions require to also impose an assumption on the smoothness of $f(\bw,z)$, which is relaxed to a H\"older continuity of $\nabla f(\bw,z)$. Both the PL condition and H\"older continuity condition do not depend on the iterates and can be checked by objective function themselves, which are standard in the literature and satisfied by many nonconvex models~\citep{foster2018uniform,reddi2016stochastic,karimi2016linear}. %which are satisfied by many nonconvex models including neural networks with one-hidden layers, ResNets with linear activation and objective functions in matrix factorization~\citep{foster2018uniform}.
It should be noted that convergence analysis was also performed when $f(\bw,z)$ is convex~\citep{lei2018convergence} and nonconvex~\citep{nguyen2018sgd} without bounded gradient assumptions, both of which, however, require $\ecal(\bw)$ to be strongly convex and $f(\bw,z)$ to be smooth.
Furthermore, we establish a linear convergence of SGD in the case with zero variances, while this linear convergence was only derived for batch gradient descent applied to gradient-dominated objective functions~\citep{karimi2016linear}.
Necessary and sufficient conditions as $\sum_{t=1}^{\infty}\eta_t=\infty,\sum_{t=1}^{\infty}\eta_t^2<\infty$ were established for convergence of online mirror descent in a strongly convex setting~\citep{lei2018convergence}, which are partially extended to convergence of SGD for gradient-dominated objective functions measured by both function values and gradient~norms.

%Some variants of SGD were also studied in learning with nonconvex objective functions, including stochastic variance reduction \citep{reddi2016stochastic}, stochastic proximal gradient descent \citep{ghadimi2016mini} and distributed stochastic gradient descent~\citep{hong2018distributed}.

\vspace*{-0.15cm}
\section{Proofs\label{sec:proof}}
\vspace*{-0.05cm}
%\subsection{Functions with H\"older continuous gradients}

%\subsection{Proof of Lemma \ref{lem:smooth}\label{sec:lemma}}
%In this subsection, we present the proof of Lemma \ref{lem:smooth} for functions with H\"older continuous gradients, which plays an important role in our convergence analysis.
%Here we prove Lemma \ref{lem:smooth} important for our analysis.

\vspace*{-0.12cm}
\subsection{Proof of Theorem \ref{thm:main}\label{sec:proof-main}}
\vspace*{-0.04cm}
%$C_1=\sum_{t=1}^{\infty}\eta_t^{1+\alpha}$.
In this section, we present the proofs of Theorem \ref{thm:main} and Corollary \ref{cor:rate} on convergence of SGD applied to general nonconvex loss functions.
To this aim, we first prove Lemma \ref{lem:smooth} and introduce
the Doob's forward convergence theorem on almost sure convergence (see, e.g.,~\citep{doob1994graduate} on page 195).
\begin{proof}[Proof of Lemma \ref{lem:smooth}]
  %For any $\bw$ and $\tilde{\bw}\in\hcal_K$, consider the function $\psi:\rbb\mapsto\rbb$ defined by $\psi(t)=\phi(\bw+t(\tilde{\bw}-\bw))$. It is clear that
%  \begin{align*}
%    & |\psi'(t)-\psi'(\tilde{t})| \\
%    & = \big|\big\langle\tilde{\bw}-\bw,\nabla\phi(\bw+t(\tilde{\bw}-\bw))-\nabla\phi(\bw+\tilde{t}(\tilde{\bw}-\bw))\big\rangle\big| \\
%    & \leq \|\tilde{\bw}-\bw\|_2L|t-\tilde{t}|^{\alpha}\|\tilde{\bw}-\bw\|_2^\alpha
%     =L\|\tilde{\bw}-\bw\|_2^{1+\alpha}|t-\tilde{t}|^\alpha.
%  \end{align*}
%  It then follows that
%  \begin{align*}
%    & \phi(\tilde{\bw}) - \phi(\bw) - \langle\tilde{\bw}-\bw,\nabla\phi(\bw)\rangle \\
%    & = \psi(1) - \psi(0) - \psi'(0) \leq \int_0^1\big|\psi'(t)-\psi'(0)\big|dt \\
%    & \leq L\|\tilde{\bw}-\bw\|_2^{1+\alpha}\int_0^1t^\alpha dt=\frac{L}{1+\alpha}\|\tilde{\bw}-\bw\|_2^{1+\alpha}.
%  \end{align*}
%  This proves \eqref{smooth-a}. %If $\phi(\bw)\geq0$ for any $\bw\in\hcal_K$, then $\|\nabla \phi(\bw)\|_2^{\frac{1+\alpha}{\alpha}}\leq\frac{(1+\alpha)^{1+\frac{1}{\alpha}}}{\alpha}L^{1\over\alpha}\phi(\bw)$.

  Eq. \eqref{smooth-a} can be proved in the same way as the proof of Part (a) of Proposition 1 in \citep{ying2017unregularized}.
  We now prove \eqref{smooth-b} for non-negative $\phi$. We only need to consider the case $\nabla\phi(\bw)\neq0$. In this case, set
  \[
    \tilde{\bw}=\bw-L^{-\frac{1}{\alpha}}\|\nabla\phi(\bw)\|_2^{\frac{1}{\alpha}}\|\nabla\phi(\bw)\|_2^{-1}\nabla\phi(\bw)
  \]
   in \eqref{smooth-a}. We derive
   \begin{align*}
     & 0 \leq \phi(\tilde{\bw}) \leq \phi(\bw) - \Big\langle L^{-\frac{1}{\alpha}}\|\nabla\phi(\bw)\|_2^{\frac{1}{\alpha}}\frac{\nabla\phi(\bw)}{\|\nabla\phi(\bw)\|_2},\nabla\phi(\bw)\Big\rangle \\
     &\qquad + \frac{L}{1+\alpha}L^{-\frac{1+\alpha}{\alpha}}\|\nabla\phi(\bw)\|_2^{\frac{1+\alpha}{\alpha}} \\
      & = \phi(\bw) - L^{-\frac{1}{\alpha}}\|\nabla\phi(\bw)\|_2^{\frac{1+\alpha}{\alpha}} + L^{-\frac{1}{\alpha}}(1+\alpha)^{-1}\|\nabla\phi(\bw)\|_2^{\frac{1+\alpha}{\alpha}}\\
     & =\phi(\bw) - \frac{\alpha L^{-\frac{1}{\alpha}}}{1+\alpha}\|\nabla\phi(\bw)\|_2^{\frac{1+\alpha}{\alpha}},
   \end{align*}
   from which the stated bound \eqref{smooth-b} follows.
\end{proof}
\begin{lemma}\label{lem:doob}
  Let $\{\widetilde{X}_t\}_{t\in\nbb}$ be a sequence of non-negative random variables with $\ebb[\widetilde{X}_1]<\infty$ and let $\{\fcal_t\}_{t\in\nbb}$ be a nested sequence of sets of random variables
  with $\fcal_t\subset \fcal_{t+1}$ for all $t\in\nbb$. If $\ebb[\widetilde{X}_{t+1}|\fcal_t]\leq \widetilde{X}_t$ for all $t\in\nbb$, then $\widetilde{X}_t$ converges to a nonnegative random variable $\widetilde{X}$ a.s.
  and $\widetilde{X}<\infty$ a.s..
\end{lemma}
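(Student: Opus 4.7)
The plan is to recognize this as the classical Doob's supermartingale convergence theorem for nonnegative supermartingales and prove it via the upcrossing inequality, following the standard approach but organized so that each step is self-contained.

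First, I would observe that the hypothesis $\ebb[\widetilde{X}_{t+1}|\fcal_t]\leq \widetilde{X}_t$, together with non-negativity, makes $\{\widetilde{X}_t\}$ a non-negative supermartingale with respect to the filtration $\{\fcal_t\}$. Taking total expectations and iterating gives $\ebb[\widetilde{X}_t]\leq \ebb[\widetilde{X}_1]<\infty$ for every $t$, so the sequence is $L^1$-bounded.

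Next, for fixed rationals $a<b$, I would define the standard upcrossing stopping times $\tau_1<\tau_2<\cdots$ that alternately mark the times $\widetilde{X}$ drops below $a$ and then rises above $b$, and let $U_n([a,b])$ be the number of complete upcrossings of $[a,b]$ by $\widetilde{X}_1,\ldots,\widetilde{X}_n$. Doob's upcrossing inequality states
\[
(b-a)\,\ebb[U_n([a,b])]\leq \ebb[(\widetilde{X}_n-a)^-]\leq a+\ebb[\widetilde{X}_n]\leq a+\ebb[\widetilde{X}_1].
\]
Letting $n\to\infty$ and using monotone convergence gives $\ebb[U_\infty([a,b])]<\infty$, hence $U_\infty([a,b])<\infty$ a.s. Taking a countable union over all pairs of rationals $a<b$, the event $\{\liminf_t \widetilde{X}_t<\limsup_t \widetilde{X}_t\}$ has probability zero, so $\widetilde{X}_t$ converges a.s. to some $\widetilde{X}\in[0,\infty]$.

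Finally, I would apply Fatou's lemma to conclude that
\[
\ebb[\widetilde{X}]=\ebb\bigl[\liminf_t \widetilde{X}_t\bigr]\leq \liminf_t \ebb[\widetilde{X}_t]\leq \ebb[\widetilde{X}_1]<\infty,
\]
which forces $\widetilde{X}<\infty$ a.s.\ and completes the proof. The main obstacle is justifying the upcrossing inequality itself, which requires a careful argument using optional stopping applied to the supermartingale at the upcrossing stopping times $\tau_k\wedge n$ to telescope the increments into a bound of the form $(b-a)U_n([a,b])\leq \sum_k (\widetilde{X}_{\tau_{2k}\wedge n}-\widetilde{X}_{\tau_{2k-1}\wedge n})^+$ and then taking expectations; since this is textbook material (see e.g.\ \citep{doob1994graduate}), I would cite it rather than reprove it, and the remaining arguments are routine applications of monotone convergence, countable subadditivity, and Fatou's lemma.
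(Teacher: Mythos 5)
Your proposal is correct: it is the standard proof of Doob's forward (supermartingale) convergence theorem via the upcrossing inequality, followed by Fatou's lemma to get $\ebb[\widetilde{X}]\leq\ebb[\widetilde{X}_1]<\infty$ and hence $\widetilde{X}<\infty$ a.s. The paper does not prove this lemma at all---it simply cites it as Doob's forward convergence theorem from \citep{doob1994graduate}---so your argument supplies exactly the classical proof behind that citation and is consistent with the paper's approach.
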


%According to the notation $f(\bw,z)=\ell\big(\langle \bw,K_x\rangle,y\big)$, \eqref{sgd-rkhs} can be equivalently formulated as
%\begin{equation}\label{sgd}
%  \bw_{t+1}=\bw_t-\eta_t\nabla f(\bw_t,z_t).
%\end{equation}
\begin{proof}[Proof of Theorem \ref{thm:main}]
We first prove Part (a). According to Assumption \ref{asm:smooth}, we know
\begin{align*}
  \|\nabla \ecal(\bw)&-\nabla\ecal(\tilde{\bw})\|_2 = \big\|\ebb[\nabla f(\bw,z)]-\ebb[\nabla f(\tilde{\bw},z)]\big\|_2 \\
   & \leq \ebb\big[\|\nabla f(\bw,z)-\nabla f(\tilde{\bw},z)\|_2\big]\leq L\|\bw-\tilde{\bw}\|_2^\alpha.
\end{align*}
Therefore, $\nabla\ecal(\bw)$ is $\alpha$-H\"older continuous.
According to \eqref{smooth-a} with $\phi=\ecal$ and \eqref{sgd}, we know%$\ecal(\bw_{t+1})$ can be upper bounded by
\begin{align}
  & \ecal(\bw_{t+1})
  \!\leq\!\ecal(\bw_t) \!+\! \langle \bw_{t+1}\!-\!\bw_t, \nabla \ecal(\bw_t)\rangle \!+\! \frac{L\|\bw_{t+1}\!-\!\bw_t\|_2^{1\!+\!\alpha}}{1+\alpha} \notag\\
   & = \ecal(\bw_t) \!-\! \eta_t\langle \nabla f(\bw_t,z_t), \nabla \ecal(\bw_t)\rangle \!+\! \frac{L\eta_t^{1+\alpha}}{1+\alpha}\|\nabla f(\bw_t,z_t)\|_2^{1+\alpha} \notag\\
   & \leq \ecal(\bw_t) - \eta_t\langle \nabla f(\bw_t,z_t), \nabla \ecal(\bw_t)\rangle \notag\\
   &\qquad \qquad+\frac{L^2\eta_t^{1+\alpha}}{1+\alpha}\Big(\frac{1+\alpha}{\alpha}\Big)^{\alpha}f^\alpha(\bw_t,z_t),\label{main-1}
\end{align}
where the last inequality is due to \eqref{smooth-b}. With the Young's inequality for all $\mu,v\in\rbb,p^{-1}+q^{-1}=1,p\geq0$
\begin{equation}\label{young}
\mu v\leq p^{-1}|\mu|^p+q^{-1}|v|^q,
\end{equation}
we get
$
\Big(\frac{(1+\alpha)f(\bw_t,z_t)}{\alpha}\Big)^\alpha\leq\alpha\Big(\frac{(1+\alpha)f(\bw_t,z_t)}{\alpha}\Big)^{\alpha\frac{1}{\alpha}}+1-\alpha.
$
Plugging the above inequality into \eqref{main-1} shows
\begin{multline*}
  \ecal(\bw_{t+1})\leq \ecal(\bw_t) - \eta_t\langle \nabla f(\bw_t,z_t), \nabla \ecal(\bw_t)\rangle \\+ \frac{L^2\eta_t^{1+\alpha}}{1+\alpha}\Big((1+\alpha)f(\bw_t,z_t)+1-\alpha\Big).
\end{multline*}
Taking conditional expectation with respect to $z_t$, we derive
\begin{align}
  & \ebb_{z_t}[\ecal(\bw_{t+1})] \notag\\
  &\leq \ecal(\bw_t) - \eta_t\|\nabla \ecal(\bw_t)\|_2^2 + L^2\eta_t^{1+\alpha}\big(\ecal(\bw_t)+1-\alpha\big)\label{main-2}\\
  & \leq (1+L^2\eta_t^{1+\alpha})\ecal(\bw_t) - \eta_t\|\nabla \ecal(\bw_t)\|_2^2 + L^2(1-\alpha)\eta_t^{1+\alpha}\label{main-3}.
\end{align}
It then follows that
\[
\ebb[\ecal(\bw_{t+1})]  \leq (1+L^2\eta_t^{1+\alpha})\ebb[\ecal(\bw_t)]+L^2(1-\alpha)\eta_t^{1+\alpha},
\]
from which we derive
\begin{multline*}
\ebb[\ecal(\bw_{t+1})] + L^2(1-\alpha)\sum_{k=t+1}^{\infty}\eta_k^{1+\alpha} \\ \leq (1+L^2\eta_t^{1+\alpha})\Big(\ebb[\ecal(\bw_t)]
+L^2(1-\alpha)\sum_{k=t}^{\infty}\eta_k^{1+\alpha}\Big).
\end{multline*}
Introduce
$
  A_t=\ebb[\ecal(\bw_t)] + L^2(1-\alpha)\sum_{k=t}^{\infty}\eta_k^{1+\alpha},\forall t\in\nbb.
$
Then, it follows from the inequality $1+a\leq\exp(a)$ that
$
A_{t+1}\leq (1+L^2\eta_t^{1+\alpha})A_t\leq\exp(L^2\eta_t^{1+\alpha})A_t.
$
An application of the above inequality recursively then gives
\[
A_{t+1} \leq  \exp\Big(L^2\sum_{k=1}^{t}\eta_k^{1+\alpha}\Big)A_1\leq \exp\Big(L^2\sum_{k=1}^{\infty}\eta_k^{1+\alpha}\Big)A_1:=C_2,
\]
from which we know $\ebb[\ecal(\bw_t)]\leq C_2,\forall t\in\nbb.$
Plugging the above inequality back into \eqref{main-3} gives
\begin{equation}\label{main-9}
  \ebb[\ecal(\bw_{t+1})] \leq \ebb[\ecal(\bw_t)] - \eta_t\ebb[\|\nabla \ecal(\bw_t)\|_2^2] + L^2\eta_t^{1+\alpha}(C_2\!+\!1\!-\!\alpha).
\end{equation}
A summation of the above inequality then implies %followed with a reformulation then implies
\begin{multline*}
  \sum_{t=1}^{T}\eta_t\ebb[\|\nabla \ecal(\bw_t)\|_2^2] \leq \sum_{t=1}^{T}\big(\ebb[\ecal(\bw_t)]-\ebb[\ecal(\bw_{t+1})]\big) \\+ L^2(C_2+1-\alpha)\sum_{t=1}^{T}\eta_t^{1+\alpha}\leq \ecal(\bw_1) + L^2(C_2+1-\alpha)C_1,
\end{multline*}
from which we directly get \eqref{main-a} with $C:=\ecal(\bw_1)+L^2C_1(C_2+1-\alpha)$. This proves Part (a).

We now prove Part (b). %on the the almost sure convergence of $\ecal(\bw_t)$.
Multiplying both sides of \eqref{main-3} by $\prod_{k=t+1}^{\infty}(1+L^2\eta_k^{1+\alpha})$, the term $\prod_{k=t+1}^{\infty}(1+L^2\eta_k^{1+\alpha})\ebb_{z_t}[\ecal(\bw_{t+1})]$ can be upper bounded by
\begin{align}
%&\prod_{k=t+1}^{\infty}(1+L^2\eta_k^{1+\alpha})\ebb_{z_t}[\ecal(\bw_{t+1})]\notag\\
& \prod_{k=t}^{\infty}(1\!+\!L^2\eta_k^{1+\alpha})\ecal(\bw_t)\!+\!L^2(1\!-\!\alpha)\prod_{k=t\!+\!1}^\infty(1\!+\!L^2\eta_k^{1+\alpha})\eta_t^{1+\alpha}\notag\\
&\leq \prod_{k=t}^{\infty}(1+L^2\eta_k^{1+\alpha})\ecal(\bw_t)+C_3\eta_t^{1+\alpha},\label{main-4}
\end{align}
where we introduce %(since $\sum_{t=1}^{\infty}\eta_t^{1+\alpha}<\infty$)
$C_3=L^2(1-\alpha)\prod_{k=1}^{\infty}(1+L^2\eta_k^{1+\alpha})<\infty$.
Introduce the stochastic process
\[
\widetilde{X}_t=\prod_{k=t}^{\infty}(1+L^2\eta_k^{1+\alpha})\ecal(\bw_t)+C_3\sum_{k=t}^{\infty}\eta_k^{1+\alpha}.
\]
Eq. \eqref{main-4} amounts to saying $\ebb_{z_t}[\tilde{X}_{t+1}]\leq\tilde{X}_t$ for all $t\in\nbb$, which shows that $\{\widetilde{X}_t\}_{t\in\nbb}$ is a
non-negative supermartingale.
Furthermore, the assumption $\sum_{t=1}^{\infty}\eta_t^{1+\alpha}<\infty$ implies that $\widetilde{X}_1<\infty$.
We can apply Lemma \ref{lem:doob} to show that $\lim_{t\to\infty}\widetilde{X}_t=\widetilde{X}$ for a non-negative random variable $\widetilde{X}$
a.s.. This together with the assumption $\sum_{t=1}^{\infty}\eta_t^{1+\alpha}<\infty$ implies $\lim_{t\to\infty}\widetilde{Y}_t\to\widetilde{Y}$ for a non-negative
random variable $\widetilde{Y}$, where $\widetilde{Y}_t=\prod_{k=t}^{\infty}(1+L^2\eta_k^{1+\alpha})\ecal(\bw_t)$ for all $t\in\nbb$ and $\widetilde{Y}<\infty$ a.s.. Furthermore, it is clear
a.s. that
\begin{align*}
& \big|\ecal(\bw_t)-\widetilde{Y}\big| =
\Big|\Big(1-\prod_{k=t}^{\infty}(1+L^2\eta_k^{1+\alpha})\Big)\ecal(\bw_t)+\\
&\prod_{k=t}^{\infty}(1\!+\!L^2\eta_k^{1\!+\!\alpha})\ecal(\bw_t)\!-\!\widetilde{Y}\Big|
\leq \Big|\Big(1\!-\!\prod_{k=t}^{\infty}(1\!+\!L^2\eta_k^{1\!+\!\alpha})\Big)\Big|\ecal(\bw_t)\\
&\quad +\Big|\prod_{k=t}^{\infty}(1+L^2\eta_k^{1+\alpha})\ecal(\bw_t)-\widetilde{Y}\Big|\xrightarrow[t \to \infty]{} 0,
\end{align*}
where we have used the fact $\lim_{t\to\infty}\prod_{k=t}^{\infty}(1+L^2\eta_k^{1+\alpha})=1$ due to $\sum_{t=1}^{\infty}\eta_t^{1+\alpha}<\infty$.
That is, $\ecal(\bw_t)$ converges to $\widetilde{Y}$ a.s.. %This proves Part (b).

We now prove Part (c) by contradiction.
According to Assumption \ref{asm:smooth} and Lemma \ref{lem:smooth}, we know
\begin{align*}
  \|\nabla f(\bw_k,z_k)\|_2 &\leq \Big(\frac{(1+\alpha)L^{1\over\alpha}f(\bw_k,z_k)}{\alpha}\Big)^{\frac{\alpha}{1+\alpha}}\\
  %&\leq \frac{\Big(\frac{(1+\alpha)L^{1\over\alpha}f(\bw_k,z_k)}{\alpha}\Big)^{\frac{\alpha}{1+\alpha}\frac{1+\alpha}{\alpha}}}{\frac{1+\alpha}{\alpha}}+\frac{1}{1+\alpha}\\
  &\leq L^{1\over\alpha}f(\bw_k,z_k)+(1+\alpha)^{-1},
\end{align*}
where we have used the Young's inequality \eqref{young}.
Taking expectations over both sides and using $\ebb[\ecal(\bw_k)]\leq C_2$, we derive
\begin{align}
  \ebb[\|\nabla f(\bw_k,z_k)\|_2]&\leq L^{1\over\alpha}\ebb[\ecal(\bw_k)]+(1+\alpha)^{-1}\notag\\
  &\leq L^{\frac{1}{\alpha}} C_2+(1+\alpha)^{-1}:=C_4.\label{main-8}
\end{align}
Suppose to contrary that
$\lim\sup_{t\to\infty}\ebb[\|\nabla \ecal(\bw_t)\|_2]>0.$
By Part (a) and the assumption $\sum_{t=1}^{\infty}\eta_t=\infty$, we know
\[
  \lim\inf_{t\to\infty}\ebb[\|\nabla \ecal(\bw_t)\|_2]\leq \lim\inf_{t\to\infty}\sqrt{\ebb[\|\nabla \ecal(\bw_t)\|_2^2]}=0.
\]
Then there exists an $\epsilon>0$ such that $\ebb[\|\nabla \ecal(\bw_t)\|_2]<\epsilon$ for infinitely many $t$ and $\ebb[\|\nabla \ecal(\bw_t)\|_2]>2\epsilon$
for infinitely many $t$. Let $\tcal$ be a subset of integers such that for every $t\in\tcal$ we can find an integer $k(t)>t$ such that
\begin{multline}\label{main-5}
  \ebb[\|\nabla \ecal(\bw_t)\|_2]<\epsilon,\quad\ebb[\|\nabla \ecal(\bw_{k(t)})\|_2]>2\epsilon\quad\text{and}\quad\\ \epsilon\leq\ebb[\|\nabla \ecal(\bw_k)\|_2]\leq2\epsilon\;\text{for all }t<k<k(t).
\end{multline}
Furthermore, we can assert that $\eta_t\leq\epsilon/(2LC_4)$ for every $t$ larger than the smallest integer in $\tcal$ since $\lim_{t\to\infty}\eta_t=0$.

By \eqref{main-8}, \eqref{main-5} and Assumption \ref{asm:smooth} with $\alpha=1$, we know
\begin{align}
   & \epsilon \leq \ebb[\|\nabla \ecal(\bw_{k(t)})\|_2] \!-\! \ebb[\|\nabla \ecal(\bw_t)\|_2]
   \!\leq \ebb[\|\nabla \ecal(\bw_{k(t)})\!-\!\nabla \ecal(\bw_t)\|_2]\notag\\
   &\!\leq\! \sum_{k=t}^{k(t)-1}\!\ebb[\|\nabla \ecal(\bw_{k+1})\!-\!\nabla \ecal(\bw_k)\|_2]
   \leq L \sum_{k=t}^{k(t)-1}\ebb[\|\bw_{k+1}\!-\!\bw_k\|_2]\notag\\
   &\!=\! L \sum_{k=t}^{k(t)\!-\!1}\eta_k\ebb[\|\nabla f(\bw_k,z_k)\|_2]
   \leq LC_4\sum_{k=t}^{k(t)-1}\eta_k.\label{main-7}
\end{align}
Analogously, one can show
\begin{align*}
   & \ebb[\|\nabla \ecal(\bw_{t+1})\|_2] \!-\! \ebb[\|\nabla \ecal(\bw_t)\|_2]
   \!\leq\! \ebb[\|\nabla \ecal(\bw_{t+1})\!-\!\nabla \ecal(\bw_t)\|_2] \\
   & \leq L\ebb[\|\bw_{t+1}-\bw_t\|_2]\leq L\eta_t\ebb[\|\nabla f(\bw_t,z_t)\|_2]
   \leq LC_4\eta_t,
\end{align*}
from which, \eqref{main-5} and $\eta_t\leq\epsilon/(2LC_4)$ for any $t$ larger than the smallest integer in $\tcal$ we get
\[
  \ebb[\|\nabla \ecal(\bw_k)\|_2]\geq \epsilon/2\quad\text{for every }k=t,t+1,\ldots,k(t)-1
\]
and all $t\in\tcal$.
It then follows that
\begin{equation}\label{main-11}
\ebb[\|\nabla \ecal(\bw_k)\|^2_2]\geq \big(\ebb[\|\nabla \ecal(\bw_k)\|_2]\big)^2 \geq \epsilon^2/4
\end{equation}
for every $k=t,t+1,\ldots,k(t)-1$
and all $t\in\tcal$.
Putting \eqref{main-11} back into \eqref{main-9}, $\ebb[\ecal(\bw_{k(t)})]$ can be upper bounded by
\begin{align*}
  %&\ebb[\ecal(\bw_{k(t)})] \\
  &\ebb[\ecal(\bw_t)] - \sum_{k=t}^{k(t)-1}\eta_k\ebb[\|\nabla \ecal(\bw_k)\|_2^2] + L^2C_2\sum_{k=t}^{k(t)-1}\eta_k^2 \\
   & \leq \ebb[\ecal(\bw_t)]-\frac{\epsilon^2}{4}\sum_{k=t}^{k(t)-1}\eta_k+L^2C_2\sum_{k=t}^{k(t)-1}\eta_k^2.
\end{align*}
This together with \eqref{main-7} implies that
\begin{multline}\label{main-10}
  \epsilon^3/(4LC_4)\leq\frac{\epsilon^2}{4}\sum_{k=t}^{k(t)-1}\eta_k\leq\ebb[\ecal(\bw_t)]-\ebb[\ecal(\bw_{k(t)})]\\
  +L^2C_2\sum_{k=t}^{k(t)-1}\eta_k^2,\quad\forall t\in\tcal.
\end{multline}
Part (b) implies that $\{\ebb[\ecal(\bw_t)]\}_t$ converges to a non-negative value, which together with the assumption $\sum_{t=1}^{\infty}\eta_t^2<\infty$, shows that the right-hand side of \eqref{main-10} vanishes to zero
as $t\to\infty$, while the left-hand side is a positive number. This leads to a contradiction and $\lim\sup_{t\to\infty}\ebb[\|\nabla \ecal(\bw_t)\|_2]=0$. %. Therefore, we must have
%This proves Part (c) and finishes the proof.
%\begin{align*}
%\Big|\prod_{k=t}^{\infty}(1+L\eta_k^{1+\alpha})\ecal(\bw_t)-\widetilde{Y}\Big| &
%= \Big|\prod_{k=t}^{\infty}(1+L\eta_k^{1+\alpha})(\ecal(\bw_t)-\widetilde{Y})+\Big(\prod_{k=t}^{\infty}(1+L\eta_k^{1+\alpha})-1\Big)\widetilde{Y}\Big|\\
%&\leq \Big|\prod_{k=t}^{\infty}(1+L\eta_k^{1+\alpha})(\ecal(\bw_t)-\widetilde{Y})\Big|+
%\Big|\prod_{k=t}^{\infty}(1+L\eta_k^{1+\alpha})-1\Big||\widetilde{Y}|
%\end{align*}
\end{proof}

\begin{proof}[Proof of Corollary \ref{cor:rate}]
  Since $\theta>1/(1+\alpha)$, we know $\sum_{t=1}^{\infty}\eta_t^{1+\alpha}=\eta_1^{1+\alpha}\sum_{t=1}^\infty t^{-\theta(1+\alpha)}<\infty$.
  %Therefore, conditions in Theorem \ref{thm:main} are satisfied.
  Eq. \eqref{main-a} and %$\gamma\in(0,1)$
  \begin{equation*}%\label{rate-1}
    \frac{1}{1-\gamma}[(T+1)^{1-\gamma}-1]\leq \sum_{t=1}^{T}t^{-\gamma}\leq \frac{1}{1-\gamma}T^{1-\gamma},\;\gamma\in(0,1)
  \end{equation*}
  immediately imply $\min_{t=1,\ldots,T}\ebb[\|\nabla \ecal(\bw_t)\|_2^2]=O(T^{\theta-1})$.
  Part(b) can be proved analogously and we omit the proof.
  %We now prove Part (b). It is clear that
%  \begin{align*}
%    & \sum_{t=1}^{\infty}\eta_t^{1+\alpha} =\eta_1^{1+\alpha}\sum_{t=1}^{\infty}(t\log^{\beta}(t+1))^{-1}\\
%    & \leq \eta_1^{1+\alpha}\log^{-\beta}2+\eta_1^{1+\alpha}\sum_{t=2}^{\infty}\int_{t-1}^t(x\log^{\beta}(x+1))^{-1}dx\\
%    & \leq \eta_1^{1+\alpha}\log^{-\beta}2+\eta_1^{1+\alpha}\int_{1}^\infty\log^{-\beta}(x+1)d\log(x+1),
%    %& = \eta_1^{1+\alpha}\log^{-\beta}2+\eta_1^{1+\alpha}\int_{\log2}^\infty x^{-\beta}dx<\infty,
%  \end{align*}
%  which is finite since $\beta>1$. Therefore, conditions in Theorem \ref{thm:main} are satisfied and \eqref{main-a} holds. Furthermore,
%  \begin{align*}
%    \sum_{t=1}^{T}\eta_t %&=\eta_1\sum_{t=1}^{T}(t\log^{\beta}(t+1))^{-\frac{1}{1+\alpha}}\\
%    &\geq \eta_1\sum_{t=1}^{T}\int_{t}^{t+1}(x\log^{\beta}(x+1))^{-\frac{1}{1+\alpha}}dx\\
%    & \geq \eta_1\log^{\frac{-\beta}{1+\alpha}}(T+2)\int_1^{T+1}x^{-\frac{1}{1+\alpha}}dx\\
%    &\geq \eta_1(1+1/\alpha)\big[(T+1)^{\frac{\alpha}{1+\alpha}}-1\big]\log^{-\frac{\beta}{1+\alpha}}(T+2).
%  \end{align*}
%  This together with \eqref{main-a} then shows the stated result.
\end{proof}

\vspace*{-0.12cm}
\subsection{Proof of Theorem \ref{thm:pl}\label{sec:proof-pl}}
\vspace*{-0.04cm}
%We shall need the following elementary inequality which was essentially used in \citep{ying2006online}.
\begin{lemma}[\citep{ying2006online}]\label{lem:elementaryineq}
Let $\{\eta_t\}_{t\in\nbb}$ be a sequence of non-negative numbers such that $\lim_{t\to\infty}\eta_t=0$ and $\sum_{t=1}^{\infty}\eta_t=\infty$. Let $\alpha,a>0$ and $t_1\in\nbb$ such that $\eta_t<a^{-1}$ for any $t\geq t_1$. Then we have $\lim_{T\to\infty}\sum_{t=t_1}^{T}\eta_t^{1+\alpha}\prod_{k=t+1}^{T}(1-a\eta_k)=0$.
\end{lemma}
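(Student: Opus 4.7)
The plan is to combine a telescoping identity (which controls the simpler linear-weight sum $\sum_t \eta_t \prod_{k>t}(1-a\eta_k)$) with a split-at-$T_0$ argument that exploits both $\eta_t \to 0$ and $\sum_t \eta_t = \infty$.

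First, I would establish the telescoping identity. Writing $a\eta_t = 1 - (1-a\eta_t)$ gives the decomposition
\begin{equation*}
\eta_t \prod_{k=t+1}^{T}(1-a\eta_k) = \frac{1}{a}\Big[\prod_{k=t+1}^{T}(1-a\eta_k) - \prod_{k=t}^{T}(1-a\eta_k)\Big].
\end{equation*}
Under the hypothesis $\eta_k < 1/a$ each factor $1-a\eta_k$ is nonnegative, so summing from $t_1$ to $T$ telescopes to
\begin{equation*}
\sum_{t=t_1}^{T}\eta_t\prod_{k=t+1}^{T}(1-a\eta_k) = \frac{1}{a}\Big[1 - \prod_{k=t_1}^{T}(1-a\eta_k)\Big] \leq \frac{1}{a}.
\end{equation*}
The same bound clearly holds if the sum is restricted to start from any $T_0 \geq t_1$.

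Next, I would split the target sum at a threshold $T_0$ chosen as follows. Fix $\epsilon > 0$; since $\eta_t \to 0$, pick $T_0 \geq t_1$ with $\eta_t^\alpha \leq \epsilon a/2$ for every $t > T_0$. The tail part then satisfies, by the telescoping bound,
\begin{equation*}
\sum_{t=T_0+1}^{T}\eta_t^{1+\alpha}\prod_{k=t+1}^{T}(1-a\eta_k) \leq \frac{\epsilon a}{2}\sum_{t=T_0+1}^{T}\eta_t\prod_{k=t+1}^{T}(1-a\eta_k) \leq \frac{\epsilon}{2}.
\end{equation*}
For the head, factor out $\prod_{k=T_0+1}^{T}(1-a\eta_k)$ and use $1-x \leq e^{-x}$ to obtain
\begin{equation*}
\prod_{k=T_0+1}^{T}(1-a\eta_k) \leq \exp\Big(-a\sum_{k=T_0+1}^{T}\eta_k\Big)\xrightarrow[T\to\infty]{} 0
\end{equation*}
by divergence of $\sum_t \eta_t$. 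Since $T_0$ is fixed, the quantity $\sum_{t=t_1}^{T_0}\eta_t^{1+\alpha}\prod_{k=t+1}^{T_0}(1-a\eta_k)$ is a fixed finite constant, so the full head contribution is eventually at most $\epsilon/2$. Combining the two parts yields the limit.

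The main obstacle is one of \emph{decoupling the $T$-dependence}: the product factors themselves depend on $T$, so one cannot directly invoke a dominated-convergence argument over the summation index. The split at a fixed $T_0$ resolves this by reducing the analysis to a tail that is uniformly small (exploiting $\eta_t^\alpha \to 0$) and a head whose product prefactor decays to zero (exploiting $\sum_t \eta_t = \infty$). A secondary subtlety is that the telescoping identity requires $1 - a\eta_k \geq 0$, which is precisely what the hypothesis $\eta_t < 1/a$ for $t \geq t_1$ secures.
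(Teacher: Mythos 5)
Your proof is correct: the telescoping identity, the bound $\sum_{t}\eta_t\prod_{k>t}(1-a\eta_k)\leq 1/a$, and the split at a fixed $T_0$ (tail controlled by $\eta_t^\alpha\to 0$, head killed by $\prod_{k=T_0+1}^{T}(1-a\eta_k)\leq\exp(-a\sum_{k=T_0+1}^{T}\eta_k)\to 0$) all check out, and the nonnegativity of the factors needed for the telescoping bound is indeed supplied by $\eta_t<a^{-1}$ for $t\geq t_1$. The paper itself gives no proof of this lemma --- it is stated with a citation to Ying and Zhou (2006) --- and your argument is the standard one for this classical result, so there is nothing to contrast.
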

\begin{proof}[Proof of Theorem \ref{thm:pl}]
  We first prove Part (a).
  We introduce
  $B_t:=\ebb[\ecal(\bw_t)]-\ecal(\bw^*),\forall t\in\nbb$.
  By \eqref{main-3} and Assumption~\ref{ass:PL}, %we know
  \begin{multline*}
    \ebb[\ecal(\bw_{t+1})]\leq (1+L^2\eta_t^{1+\alpha})\ebb[\ecal(\bw_t)]\\-2\mu\eta_t\big(\ebb[\ecal(\bw_t)-\ecal(\bw^*)]\big)+L^2(1-\alpha)\eta_t^{1+\alpha}.
  \end{multline*}
  Subtracting $\ecal(\bw^*)$ from both sides gives
  \begin{align}
     & \ebb[\ecal(\bw_{t+1})]-\ecal(\bw^*) \leq (1+L^2\eta_t^{1+\alpha})\big(\ecal(\bw_t)-\ecal(\bw^*)\big) \notag\\
     &  + L^2\eta_t^{1+\alpha}\ecal(\bw^*) \!-\! 2\mu\eta_t\big(\ebb[\ecal(\bw_t)]\!-\!\ecal(\bw^*)\big)\!+\!L^2(1\!-\!\alpha)\eta_t^{1+\alpha}\notag\\
     & = \big(1+L^2\eta_t^{1+\alpha}-2\mu\eta_t\big)\big(\ebb[\ecal(\bw_t)]-\ecal(\bw^*)\big)+C_5\eta_t^{1+\alpha},\notag%\label{FL-1}
  \end{align}
  where we introduce $C_5:=L^2\big(\ecal(\bw^*)+1-\alpha\big)$.
  The assumption $\sum_{t=1}^{\infty}\eta_t^{1+\alpha}<\infty$ implies $\lim_{t\to\infty}\eta_t=0$, which further implies the existence of $t_1$ such that $\eta_t^\alpha\leq \mu/L^2$ and $\eta_t\leq1/\mu$ for all $t\geq t_1$.
  Therefore, it follows that%from \eqref{FL-1}
  \begin{equation}\label{pl-0}
    B_{t+1} \leq (1-\mu\eta_t)B_t+ C_5\eta_t^{1+\alpha},\quad\forall t\geq t_1.
  \end{equation}
  A recursive application of this inequality then shows
  \begin{equation}\label{pl-1}
    B_{T+1}\leq \prod_{t=t_1}^{T}(1-\mu\eta_t)B_{t_1}+C_5\sum_{t=t_1}^{T}\eta_t^{1+\alpha}\prod_{k=t+1}^{T}(1-\mu\eta_k),
  \end{equation}
  where we denote $\prod_{k=T+1}^{T}(1-\mu\eta_k)=1$.
  The first term of the above inequality can be estimated by the standard inequality $1-a\leq \exp(-a)$ for $a>0$ together with the assumption $\sum_{t=1}^{\infty}\eta_t=\infty$ as
\begin{align}
  \prod_{t=t_1}^{T}(1\!-\!\mu\eta_t)B_{t_1}
  \!\leq\!  \exp\Big(-\mu\sum_{t=t_1}^{T}\eta_t\Big)B_{t_1} \xrightarrow[T \to \infty]{} 0.\label{pl-2}
   \end{align}
   An application of Lemma \ref{lem:elementaryineq} with $a=\mu$ then shows that
   \begin{equation}\label{pl-3}
     \lim_{T\to\infty}\sum_{t=t_1}^{T}\eta_t^{1+\alpha}\prod_{k=t+1}^{T}(1-\mu\eta_k)=0.
   \end{equation}
   Combining \eqref{pl-1}, \eqref{pl-2} and \eqref{pl-3} together shows
   \begin{equation}\label{pl-4}
     \lim_{T\to\infty}\ebb[\ecal(\bw_T)]=\ecal(\bw^*).
   \end{equation}
  According to Part (b) of Theorem \ref{thm:main}, we know that $\{\ecal(\bw_t)\}_t-\ecal(\bw^*)$ converges to a random variable $\widetilde{X}$ a.s., which
  is nonnegative by the definition of $\bw^*$. This together with Fatou's lemma and \eqref{pl-4}, implies that
  \begin{align*}
    \ebb[\widetilde{X}]&=\ebb\big[\lim_{t\to\infty}\ecal(\bw_t)\big]-\ecal(\bw^*)\\
    &\leq \lim\inf_{t\to\infty}\ebb\big[\ecal(\bw_t)\big]-\ecal(\bw^*)=0.
  \end{align*}
  Since $\widetilde{X}$ is non-negative, we have $\lim_{t\to\infty}\ecal(\bw_t)=\ecal(\bw^*)$ a.s.. Let $\phi(\bw)=\ecal(\bw)-\ecal(\bw^*)$. It is clear that $\phi(\bw)$ satisfies \eqref{smooth-condition}
  and is non-negative. Now, we can apply Lemma \ref{lem:smooth} to show
  $
    \alpha\|\nabla\phi(\bw)\|_2^{\frac{1+\alpha}{\alpha}}\leq (1+\alpha)L^{1\over\alpha}\phi(\bw),
  $
  from which we know
  \[
  \|\nabla \ecal(\bw_t)\|_2^{\frac{1+\alpha}{\alpha}}\leq \frac{(1+\alpha)L^{1\over\alpha}}{\alpha}\big[\ecal(\bw_t)-\ecal(\bw^*)\big],\quad\forall t\in\nbb.
  \]
  This, together with non-negativity of $\|\nabla \ecal(\bw_t)\|$ and $\lim_{t\to\infty}\ecal(\bw_t)=\ecal(\bw^*)$ a.s., immediately implies that
  $\lim_{t\to\infty}\|\nabla \ecal(\bw_t)\|_2=0$ a.s.. This proves Part (a).

  We now prove Part (b). It is clear from the definition of $t_0$ that $L^2\eta_t^{1+\alpha}\leq \mu\eta_t$ for all $t\geq t_0$.
  Therefore, \eqref{pl-0} holds with $t_1=t_0$.
  Taking $\eta_t=2/(\mu(t+1))$ in \eqref{pl-0}, we derive
  \begin{equation}
    B_{t+1} \leq \frac{t-1}{t+1}B_t+C_5\Big(\frac{2}{(t+1)\mu}\Big)^{1+\alpha},\qquad\forall t\geq t_0.\label{pl-t}
  \end{equation}

  Multiplying both sides of \eqref{pl-t} with $t(t+1)$ gives
  \[
    t(t+1)B_{t+1} \leq t(t-1)B_t+C_5(2\mu^{-1})^{1+\alpha}t(t+1)^{-\alpha},\quad\forall t\geq t_0.
  \]
  Taking a summation from $t=t_0$ to $t=T$ gives
  \[
    T(T+1)B_{T+1} \leq t_0(t_0-1)B_{t_0}+C_5(2\mu^{-1})^{1+\alpha}\sum_{t=t_0}^Tt(t+1)^{-\alpha}
  \]
  It is clear that
  \begin{align*}
    \sum_{t=t_0}^{T}t(t+1)^{-\alpha} \leq \sum_{t=t_0}^{T}t^{1-\alpha}\leq \sum_{t=t_0}^{T}\int_t^{t+1}x^{1-\alpha}dx
    \leq\frac{(T+1)^{2-\alpha}}{2-\alpha},%=\int_{t_0}^{T+1}\frac{d x^{2-\alpha}}{2-\alpha}
  \end{align*}
  from which and $(T+1)/T\leq (1+t_0^{-1})$ for all $T\geq t_0$ we derive the following inequality for all $T\geq t_0$
  \begin{align*}
    B_{T+1} &\leq
    \frac{t_0(t_0-1)B_{t_0}}{T(T+1)}+\frac{C_5(2\mu^{-1})^{1+\alpha}(T+1)^{1-\alpha}}{(2-\alpha)T}\\
    & \leq
    \frac{t_0(t_0-1)B_{t_0}}{T(T+1)}+\frac{(1+t_0^{-1})^{1-\alpha}C_5(2\mu^{-1})^{1+\alpha}}{(2-\alpha)T^{\alpha}}.
  \end{align*}
  This gives the stated result with
  \[
    \widetilde{C}=(t_0-1)(\ebb[\ecal(\bw_{t_0})]-\ecal(\bw^*))+\frac{(1+t_0^{-1})^{1-\alpha}C_5(2\mu^{-1})^{\frac{1}{\alpha+1}}}{2-\alpha}.
  \]

  We  now consider Part (c). Analogous to \eqref{main-1}, we derive
  \begin{multline}\label{pl-5}
    \ecal(\bw_{t+1}) \leq \ecal(\bw_t) - \eta\langle \nabla f(\bw_t,z_t), \nabla \ecal(\bw_t)\rangle\\ +  2^{-1}L\eta^2\|\nabla f(\bw_t,z_t)\|_2^2.
  \end{multline}
  Since $\ebb[\|\nabla f(\bw^*,z)\|_2^2]=0$ we know $\nabla f(\bw^*,z)=0$ almost surely. Therefore, $\bw^*$ is a minimizer of the function $\bw\mapsto f(\bw,z)$ for almost every $z$ and the function $\phi_z(\bw)=f(\bw,z)-f(\bw^*,z)$ is non-negative almost surely. We can apply Lemma \ref{lem:smooth} to show $\|\nabla\phi_z(\bw)\|_2^2\leq2L\phi_z(\bw)$ almost surely, which is equivalent to $\|\nabla f(\bw,z)\|_2^2\leq 2L(f(\bw,z)-f(\bw^*,z))$ almost surely. Plugging this inequality back to \eqref{pl-5} gives the following inequality almost surely
  \begin{multline*}
    \ecal(\bw_{t+1}) \leq \ecal(\bw_t) - \eta\langle \nabla f(\bw_t,z_t), \nabla \ecal(\bw_t)\rangle\\ +  L^2\eta^2\big(f(\bw_t,z_t)-f(\bw^*,z_t)\big).
  \end{multline*}
  Taking expectation over both sides then gives
  \begin{multline*}
  \ebb[\ecal(\bw_{t+1})]-\ecal(\bw^*)\leq\ebb[\ecal(\bw_t)]-\ecal(\bw^*)-\ebb[\|\nabla \ecal(\bw_t)\|_2^2]\\+L^2\eta^2\ebb[\ecal(\bw_t)-\ecal(\bw^*)].
  \end{multline*}
  It follows from Assumption \ref{ass:PL} and $\eta\leq \mu/L^2$ that
  \begin{align*}
  B_{t+1}&\leq B_t-2\mu\eta B_t+L^2\eta^2B_t
  \leq (1-\mu\eta)B_t.%&=(1+L^2\eta^2-2\mu\eta)B_t
  \end{align*}
  Applying this result recursively gives the stated result. % The proof is complete.
\end{proof}

\vspace*{-0.15cm}
\section{Conclusion\label{sec:conclusion}}
\vspace*{-0.05cm}
We present a solid theoretical analysis of SGD for nonconvex learning by showing that the bounded gradient assumption imposed in literature can be removed without affecting learning rates. We consider general nonconvex objective functions and objective functions satisfying PL conditions, for each of which we derive optimal convergence rates. Interesting future work includes the extension to distributed learning~\citep{lin2018distributed}, sparse learning~\citep{sun2015sparse} and stochastic composite~mirror~descent~\citep{lei2018stochastic}.

% as well as sufficient conditions in terms of step sizes for asymptotic convergence
\vspace*{-0.15cm}
\section*{Acknowledgment}
\vspace*{-0.05cm}
This work is supported partially by the National Key Research and Development Program of China (Grant No. 2017YFC0804003), the
National Natural Science Foundation of China (Grant Nos. 11571078, 11671307, 61806091),
the Shenzhen Peacock Plan (Grant No. KQTD2016112514355531) and the Science and Technology Innovation Committee Foundation of Shenzhen (Grant No. ZDSYS201703031748284).
The corresponding author is Guiying Li.

\ifCLASSOPTIONcaptionsoff
  \newpage
\fi

% trigger a \newpage just before the given reference
% number - used to balance the columns on the last page
% adjust value as needed - may need to be readjusted if
% the document is modified later
%\IEEEtriggeratref{8}
% The "triggered" command can be changed if desired:
%\IEEEtriggercmd{\enlargethispage{-5in}}

% references section

% can use a bibliography generated by BibTeX as a .bbl file
% BibTeX documentation can be easily obtained at:
% http://mirror.ctan.org/biblio/bibtex/contrib/doc/
% The IEEEtran BibTeX style support page is at:
% http://www.michaelshell.org/tex/ieeetran/bibtex/
%\bibliographystyle{IEEEtran}
% argument is your BibTeX string definitions and bibliography database(s)
%\bibliography{IEEEabrv,../bib/paper}
%
% <OR> manually copy in the resultant .bbl file
% set second argument of \begin to the number of references
% (used to reserve space for the reference number labels box)
\bibliographystyle{IEEEtranN}

\small
%\bibliography{../learning}

% Generated by IEEEtranN.bst, version: 1.12 (2007/01/11)

% Generated by IEEEtranN.bst, version: 1.12 (2007/01/11)

% You can push biographies down or up by placing
% a \vfill before or after them. The appropriate
% use of \vfill depends on what kind of text is
% on the last page and whether or not the columns
% are being equalized.

%\vfill

% Can be used to pull up biographies so that the bottom of the last one
% is flush with the other column.
%\enlargethispage{-5in}

% that's all folks
\end{document}